\definecolor{mumred}{RGB}{222,33,77}
\definecolor{mumgreen}{RGB}{0,140,0}
\definecolor{mumblue}{RGB}{0,100,222}
\definecolor{mumpurple}{RGB}{128,0,128}
\theoremstyle{plain}
\newtheorem{theorem}{Theorem}
\newtheorem{lemma}{Lemma}
\newtheorem{corollary}{Corollary}
\newtheorem{definition}{Definition}
\newtheorem{problem}{Problem}
\newtheorem{remark}{Remark}
\newcommand{\xb}{\bm{x}}
\newcommand{\ub}{\bm{u}}
\newcommand{\fb}{\bm{f}}
\newcommand{\ab}{\bm{a}}
\newcommand{\lambdab}{\bm{\lambda}}
\newcommand{\hl}[1]{#1}
\newcommand{\until}{{\bf{U}}}
\newcommand{\always}{\square}
\newcommand{\eventually}{\lozenge}
\newcommand{\subheight}{10mm} 
\begin{document}

\title{Exact Smooth Reformulations for Trajectory Optimization \\ Under Signal Temporal Logic Specifications}

\author{Shaohang Han, Joris Verhagen, and Jana Tumova%
\thanks{This work was partially supported by the Wallenberg AI, Autonomous Systems and Software Program (WASP) funded by the Knut and Alice Wallenberg Foundation.}%
\thanks{The authors are with the Division of Robotics, Perception and Learning, School of Electrical Engineering and Computer Science, KTH Royal Institute of Technology, Stockholm, Sweden, and are also affiliated with Digital Futures. Email: \texttt{\{shaohang,~jorisv,~tumova\}@kth.se}.}%
}

\maketitle
\thispagestyle{empty}
	
\begin{abstract}
We study motion planning under Signal Temporal Logic (STL), a useful formalism for specifying spatial-temporal requirements. We pose STL synthesis as a trajectory optimization problem leveraging the STL robustness semantics. To obtain a differentiable problem without approximation error, we introduce an exact reformulation of the $\max$ and $\min$ operators. The resulting method is exact, smooth, and sound. We validate it in numerical simulations, demonstrating its practical performance.
\end{abstract}

\section{Introduction}
Temporal logics are formal specification languages used to describe complex desired behaviors of autonomous systems. One such language is Signal Temporal Logic (STL)~\cite{maler2004monitoring}, which supports quantitative temporal properties (e.g., the robot should be in region $A$ every 10–20 seconds and in region $B$ every 15–25 seconds). From a motion planning standpoint, STL is particularly appealing because it admits quantitative semantics via the notion of \textit{robustness}. The robustness measures the \textit{degree} to which a trajectory satisfies a given STL specification. Through this unique property, we can formulate an optimization problem to find a maximally robust trajectory that satisfies the STL specification \cite{belta2019formal}. Therefore, STL trajectory optimization has become an active research topic in robotic applications ranging from mobile robots \cite{verhagen2025collaborative} to manipulators \cite{kurtz2020trajectory,takano2021continuous} and bipedal robots \cite{gu2025robust}.

The robustness semantics rely on $\max$ and $\min$ operators to capture logical disjunctions, conjunctions, and temporal ambiguity. However, these operators are challenging to embed in optimization problems because they are non-smooth and can be non-convex. A common approach is to encode them using Mixed-Integer Programming (MIP) with the help of binary variables. In particular, mixed-integer convex programming (MICP) formulations~\cite{belta2019formal,kurtz2022mixed,7039363,verhagen2024temporally} are guaranteed to be complete, sound, and globally optimal for systems with linear dynamics and convex predicates. Despite their effectiveness in this regime, these formulations do not readily extend to problems with nonlinear dynamics or nonlinear predicates. Their combinatorial complexity due to binary variables becomes prohibitive for high-dimensional systems, long planning horizons, or complex formulas.

To circumvent the non-smoothness issue, researchers have also explored sampling-based trajectory optimization methods, such as Model Predictive Path Integral (MPPI) \cite{varnai2021two,11127582}. This type of method iteratively refines trajectories using estimated gradients computed from random rollouts. However, these estimates can exhibit high variance, which could lead to unstable convergence, making them scale poorly to long horizons.

An alternative is to utilize approximate smooth robustness degree~\cite{pant2017smooth,gilpin2020smooth,welikala2023smooth,mehdipour2019arithmetic}, where the $\max$ and $\min$ operators are replaced by smooth approximations. The resulting smooth functions admit derivatives, making them amenable to Nonlinear Programming (NLP) formulations that can be solved using off-the-shelf derivative-based numerical solvers. Specifically, \cite{pant2017smooth} proposes log-sum-exponential approximations, which are smooth everywhere but not sound. In contrast, \cite{mehdipour2019arithmetic} introduces arithmetic–geometric mean robustness, which is sound but not smooth everywhere. The authors in \cite{gilpin2020smooth} propose different approximations that are both sound and smooth. In \cite{gilpin2020smooth}, the robustness degree with the smooth specification is the under-approximation for the true robustness of the original specification.

Such under-approximation, however, may introduce a sub-optimality gap and can shrink the feasible set when robustness is posed as a constraint. Moreover, \cite{gilpin2020smooth} relies on hyperparameters that trade off approximation accuracy against numerical conditioning, which requires additional tuning effort. Motivated by this, we instead propose \emph{exact} smooth reformulations of the $\max$ and $\min$ operators for STL trajectory optimization that preserve the true robustness while ensuring differentiability. This removes the error in prior smooth approximations and does not require any tuning parameters. While the reformulation introduces additional variables and constraints, we show in our experiments that the computational overhead can be mitigated by effectively warm-starting the solvers.

This paper is inspired by \cite{wehbeh2025smooth}, which introduces an exact smooth reformulation for logic-constrained optimization. We adapt and extend these ideas to STL trajectory optimization, covering both Boolean and temporal operators that allow nesting. To summarize, our contributions are:
\begin{itemize}
    \item An analysis of the smooth approximation in \cite{gilpin2020smooth}, showing that its error cannot be eliminated in general.
    \item An exact smooth reformulation for STL trajectory optimization that enables derivative-based NLP solvers.
    \item Empirical validation in simulations, demonstrating efficiency and practicality, along with an open-source codebase for the community.
\end{itemize}

\section{Preliminaries And Problem Formulation}
\label{sec:preliminary_problem_formulation}
In this paper, we consider the nonlinear discrete-time system
\begin{equation}
\label{eq:system}
    \xb_{t+1} = \fb(\xb_t, \ub_t),
\end{equation}
where ${\xb_t \in \mathcal{X} \subseteq \mathbb{R}^n}$ is the system state and ${\ub_t \in \mathcal{U} \subseteq \mathbb{R}^m}$ is a control input. 
We assume that the discrete-time dynamics ${\fb:\mathbb{R}^{n} \times \mathbb{R}^{m} \mapsto\mathbb{R}^{n}}$ is smooth in its arguments. In the following, we consider time steps $0$ to $T$ with state trajectory ${\xb\coloneq[\xb_0, \xb_1, ...,\xb_T]^\top \in \mathbb{R}^{n \times (T+1)}}$ and input trajectory ${\ub\coloneq[\ub_0, \ub_1, ...,\ub_{T}]^\top \in \mathbb{R}^{m \times {(T+1)}}}$.

\subsection{Signal Temporal Logic}
We consider time-bounded  STL formulas over nonlinear predicates, which can be defined as:
\begin{definition}[Time-bounded STL] Let $I=[t_1, t_2] \subset \mathbb{Z}_{\ge 0}
$ be a closed bounded time interval, where $t_1\leq t_2$. STL formulas can be recursively written as
\begin{equation*}
\varphi ::= \mu \;\big|\; \neg  \varphi \;\big|\; \land_i \varphi_i \;\big|\; \lor_i \varphi_i \;\big|\; 
\always_{I} \varphi \;\big|\; \eventually_{I} \varphi \;\big|\; 
\varphi_1 \until_{I} \varphi_2,
\end{equation*}
where $\mu := h^{\mu}(x) \geq 0$ is a predicate with smooth function ${h^{\mu}:\mathcal{X} \mapsto \mathbb{R}}$. $\neg$ stands for negation. $\land$ and $\lor$ denote Boolean operations ``and'' and ``or''. $\Box_I$ and $\diamondsuit_I$ stand for the ``always'' and ``eventually'' operators meaning that $\varphi$ should hold $\forall t\in I$ or $\exists t\in I$ respectively. $\until_{I}$ represents the ``until'' operator which specifies $\varphi_1$ should hold until, within $I$, $\varphi_2$ should hold.
\end{definition}

STL formulas are associated with \textit{quantitative} semantics such as \textit{spatial robustness} \cite{maler2004monitoring}, which indicates the degree of satisfaction or violation of the specification. The robustness function evaluated at time $t$ for a trajectory $\xb$ and formula $\varphi$ is denoted as $\rho^\varphi(\xb, t)\in \mathbb{R}$. If a trajectory $\xb$ satisfies the formula $\varphi$, we denote that $\xb \models \varphi$. The semantics can then be recursively defined as follows:

\begin{definition}[STL Robustness Semantics]
\begingroup
\setlength{\jot}{3pt}      
\setlength{\abovedisplayskip}{2pt}    
\setlength{\belowdisplayskip}{2pt}    
\setlength{\abovedisplayshortskip}{0pt}
\setlength{\belowdisplayshortskip}{0pt}

\begin{align*}
& \xb \models \varphi \Leftrightarrow \rho^{\varphi}(\xb,0) \ge 0 \\
& \rho^{\mu}(\xb,t) = h^{\mu}(\xb_t) \\
& \rho^{\neg \varphi}(\xb,t) = -\rho^{\varphi}(\xb,t) \\
& \rho^{\varphi_1 \wedge \varphi_2}(\xb,t)
    = \min\!\big([\rho^{\varphi_1}(\xb,t),\,\rho^{\varphi_2}(\xb,t)]^\top\big) \\
& \rho^{\varphi_1 \vee \varphi_2}(\xb,t)
    = \max\!\big([\rho^{\varphi_1}(\xb,t),\,\rho^{\varphi_2}(\xb,t)]^\top\big) \\
& \rho^{\Diamond_I \varphi}(\xb,t)
    = \max\!\big([\rho^{\varphi}(\xb,t')]_{t' \in t+I}^\top\big) \\
& \rho^{\Box_I \varphi}(\xb,t)
    = \min\!\big([\rho^{\varphi}(\xb,t')]_{t' \in t+I}^\top\big) \\
& \rho^{\varphi_1 \until_I \varphi_2}(\xb,t)
  = \max\!\Big(
\\[-2pt]&\qquad
    \big[
      \min\!\big(
        [\,\rho^{\varphi_2}(\xb,t'),\; [\rho^{\varphi_1}(\xb,t'')]_{t'' \in [t,t']}^\top\,]^\top
      \big)
    \big]_{t' \in t+I}^\top
  \Big).
\end{align*}
\endgroup
\end{definition}
In the following, we use $\rho^{\varphi}(\xb)$ as shorthand for $\rho^{\varphi}(\xb,0)$. 

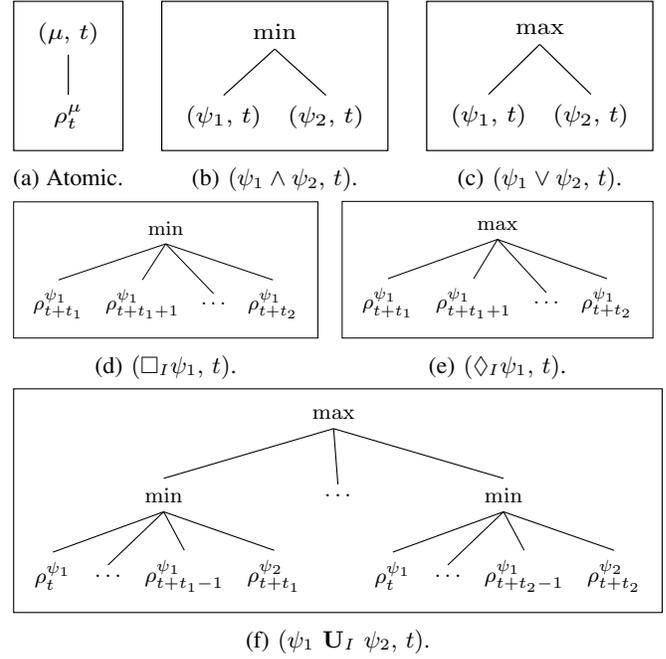
\begin{figure}[t]
\centering
\tikzset{
  every tree node/.style={font=\footnotesize}, 
}
\begin{subfigure}{.17\linewidth}
\centering
\resizebox{\linewidth}{\subheight}{%
\begin{tikzpicture}[framed, baseline=(current bounding box.center)]
  \Tree [.{${(\mu,\,t)}$} \hl{$\rho^{\mu}_t$} ]
\end{tikzpicture}}
\caption{Atomic.}
\label{fig:rt-atomic}
\end{subfigure}\hfill
\begin{subfigure}{.35\linewidth}
\centering
\resizebox{\linewidth}{\subheight}{%
\begin{tikzpicture}[framed, baseline=(current bounding box.center)]
  \Tree [.{${\min}$}
           {${(\psi_1,\,t)}$}
           {${(\psi_2,\,t)}$}
        ]
\end{tikzpicture}}
\caption{$(\psi_1 \land \psi_2,\,t)$.}
\label{fig:rt-and}
\end{subfigure}\hfill
\begin{subfigure}{.35\linewidth}
\centering
\resizebox{\linewidth}{\subheight}{%
\begin{tikzpicture}[framed, baseline=(current bounding box.center)]
  \Tree [.{${\max}$}
           {${(\psi_1,\,t)}$}
           {${(\psi_2,\,t)}$}
        ]
\end{tikzpicture}}
\caption{$(\psi_1 \lor \psi_2,\,t)$.}
\label{fig:rt-or}
\end{subfigure}
\vspace{1mm}

\begin{subfigure}{.47\linewidth}
\centering
\resizebox{\linewidth}{!}{%
\begin{tikzpicture}[framed, baseline=(current bounding box.center)]
  \Tree [.{${\min}$}
           \hl{$\rho^{\psi_1}_{t+t_1}$}
           \hl{$\rho^{\psi_1}_{t+t_1+1}$}
           [.{${\dots}$} ]
           \hl{$\rho^{\psi_1}_{t+t_2}$}
        ]
\end{tikzpicture}}
\caption{$(\always_{I}\psi_1,\,t)$.}
\label{fig:rt-always}
\end{subfigure}\hfill
\begin{subfigure}{.48\linewidth}
\centering
\resizebox{\linewidth}{!}{%
\begin{tikzpicture}[framed, baseline=(current bounding box.center)]
  \Tree [.{${\max}$}
           \hl{$\rho^{\psi_1}_{t+t_1}$}
           \hl{$\rho^{\psi_1}_{t+t_1+1}$}
           [.{${\dots}$} ]
           \hl{$\rho^{\psi_1}_{t+t_2}$}
        ]
\end{tikzpicture}}
\caption{$(\eventually_{I}\psi_1,\,t)$.}
\label{fig:rt-eventually}
\end{subfigure}
\vspace{1mm}

\begin{subfigure}{1\linewidth}
\centering
\resizebox{\linewidth}{!}{%
\begin{tikzpicture}[framed, baseline=(current bounding box.center)]
  \Tree [.{${\max}$}
    [.{${\min}$}
      \hl{$\rho^{\psi_1}_t$}
      [.{${\dots}$} ]
      \hl{$\rho^{\psi_1}_{t+t_1-1}$}
      \hl{$\rho^{\psi_2}_{t+t_1}$}
    ]
    [.{${\dots}$} ]
    [.{${\min}$}
      \hl{$\rho^{\psi_1}_t$}
      [.{${\dots}$} ]
      \hl{$\rho^{\psi_1}_{t+t_2-1}$}
      \hl{$\rho^{\psi_2}_{t+t_2}$}
    ]
  ]
\end{tikzpicture}}
\caption{$(\psi_1\ \until_{I}\ \psi_2,\,t)$.}
\label{fig:rt-until}
\end{subfigure}
\caption{Building blocks used to construct a robustness tree.}
\label{fig:robustness-tree-subfigs}
\end{figure}

\subsection{STL Robustness Tree}
For discrete-time systems, STL robustness can be parsed using a tree structure as introduced in \cite{kurtz2022mixed,takayama2025stlccp}. In this robustness tree, the leaf nodes store the robustness degree of an atomic predicate at each time step, and an internal node has only $\max$ or $\min$ type. This differs from the STL syntax tree used in \cite{vasile2017sampling,yu2024continuous}, which contains \textit{set} nodes representing the feasible sets specified by subformulas over time intervals. The STL robustness tree can be formally defined as follows and illustrated in Fig.~\ref{fig:robustness-tree-subfigs}.

\begin{definition}[STL Robustness Tree]
The \emph{robustness tree} $\mathcal{T}^{\varphi}$ is a finite rooted, ordered tree whose nodes are labeled by pairs $(\psi,t)$, where $\psi$ is a subformula of $\varphi$ and $t$ is a time index. Every internal node is typed as $\max$ or $\min$ according to the robustness semantics. The tree is constructed recursively:
\begin{itemize}
\item
If $\psi$ is an atomic predicate $\mu$, then $(\mu,t)$ is a leaf. Its value is the predicate robustness $\rho^{\mu}(\xb,t)\in\mathbb{R}$.
\item 
If $\psi=\psi_1\land\psi_2$, then $(\psi,t)$ is a $\min$-node with two children $(\psi_1,t)$ and $(\psi_2,t)$.
\item 
If $\psi=\psi_1\lor\psi_2$, then $(\psi,t)$ is a $\max$-node with two children $(\psi_1,t)$ and $(\psi_2,t)$.
\item 
If $\psi=\always_{I}\,\psi_1$, then $(\psi,t)$ is a $\min$-node with children $\{(\psi_1,t+t')\}_{t'\in I}$.
\item 
If $\psi=\eventually_{I}\,\psi_1$, then $(\psi,t)$ is a $\max$-node with children $\{(\psi_1,t+t')\}_{t'\in {I}}$.
\item 
If $\psi=\psi_1\,\until_{I}\,\psi_2$, then $(\psi,t)$ is a $\max$-node with one child per $t'\in {I}$; its $t'$-th child is a $\min$-node with children ${\{(\psi_1,t+j)\}_{j=0}^{t'-1}}$ and ${(\psi_2,t+t')}$.
\end{itemize}
\end{definition}

By traversing the robustness tree in a depth-first order, we can either embed the STL specification into an optimization problem or compute the robustness degree for a given trajectory, as shown in \cite{kurtz2022mixed}.

\subsection{Problem Formulation}
Given a nonlinear system~\eqref{eq:system}, an STL specification $\varphi$, and an initial state $\xb_0$, we seek a state trajectory $\xb$ and an input trajectory $\ub$ that satisfy $\varphi$, while optimizing for a user-defined objective function.
Formally, the motion planning problem can be formulated as the following trajectory optimization problem under the STL specification.
\begin{problem}
Given a nonlinear system~\eqref{eq:system}, an initial state $\xb_0$, and an STL specification $\varphi$, solve
\begin{subequations}\label{problem_1}
\begin{align}
\min_{\xb, \ub}\quad & - \alpha \rho^{\varphi}(\xb) + \sum_{t=0}^{T} \xb_t^{\top} \bm{Q} \xb_t + \ub_t^{\top} \bm{R} \ub_t \label{problem_1:obj}\\
\text{s.t.} \quad & \forall t \in[0, T-1]: \xb_{t+1} = \fb(\xb_t, \ub_t), \label{problem_1:dyn} \\
& \xb_0 \;\text{fixed}, \label{problem_1:init}\\
& \forall t \in[0, T]: \xb_t \in \mathcal{X}, \; \ub_t \in \mathcal{U}, \label{problem_1:sets}\\
& \rho^{\varphi}(\xb) \ge 0 \label{problem_1:rob},
\end{align}
\end{subequations}
where $\alpha$ is a positive weight, $\bm{Q} \succeq 0$ and $\bm{R} \succeq 0$ are symmetric weighting matrices.
\end{problem}

\begin{remark}
Compared to a standard trajectory optimization problem, \eqref{problem_1} augments the objective~\eqref{problem_1:obj} with the term \(-\alpha \rho^{\varphi}(\xb)\) to maximize the degree of STL satisfaction, while also including standard performance terms such as control-effort minimization. When $\bm{Q}$ and $\bm{R}$ are both $\bm{0}$, \eqref{problem_1} solves for a maximally robust trajectory. The constraint~\eqref{problem_1:rob} is added to enforce satisfaction of the formula.
\end{remark}

The function $\rho^{\varphi}(\xb)$ is generally non-smooth due to the $\max/\min$ operators. Smooth under-approximations~\cite{gilpin2020smooth} replace $\rho^{\varphi}(\xb)$ with a smooth surrogate $\tilde{\rho}^{\varphi}(\xb)=\rho^{\varphi}(\xb)-\epsilon(\xb)$, where $\epsilon$ denotes the non-negative error. However, this error cannot be computed without using discrete $\max/\min$ and solving~\eqref{problem_1}. As a result, \cite{gilpin2020smooth} directly uses $\tilde{\rho}^{\varphi}$ in ~\eqref{problem_1}, which prevents accurately quantifying the optimal spatial robustness in~\eqref{problem_1:obj}, and can shrink the feasible set by making~\eqref{problem_1:rob} more conservative.

In the following, we first provide a quantitative analysis of the smooth approximations in~\cite{gilpin2020smooth} and show that the resulting errors are unavoidable in general. We then present our main results on exact reformulations.

\section{Analysis of the Smooth Approximations}
\label{sec:analysis}
Given a vector $\ab=[a_1, \ldots, a_m]^\top$, \cite{gilpin2020smooth} proposes the following under-approximations for $\max$ and $\min$ operators:
\begin{equation}
\label{eq:smooth_op}
\begin{aligned}
&\widetilde{\max}(\ab) = 
\frac{\sum_{i=1}^{m} a_i e^{k a_i}}{\sum_{i=1}^{m} e^{k a_i}}, \\
&\widetilde{\min}(\ab) = -\frac{1}{k} \log \left( \sum_{i=1}^{m} e^{-k a_i} \right), 
\end{aligned}
\end{equation}
where $k>0$. These approximations are widely used due to their soundness and practical simplicity \cite{vahs2023risk,kurtz2020trajectory}. The authors in \cite{gilpin2020smooth,welikala2023smooth} also provide the upper bounds of the resulting approximation errors. Instead, in the following lemma, we establish lower bounds on the approximation errors.

\begin{lemma}
\label{lem:lower_bound}
Given $k>0$ and a vector $\ab=[a_1, \ldots, a_m]^\top$. Without loss of generality, we may assume that $m \geq 2$ and $\ab$ is ordered so that ${a_1\ge\cdots\ge a_m}$, since $\max$, $\min$ and~\eqref{eq:smooth_op} are invariant under permutations of the entries. Let $r$ be the number of entries attaining the maximum (i.e., ${a_1=\cdots=a_r>a_{r+1}}$) and let $s$ be the number of entries attaining the minimum (i.e., $a_{m-s} > a_{m-s+1}=\cdots=a_m$). Let $w_i:=\frac{e^{k a_i}}{\sum_{j=1}^m e^{k a_j}}$. Then
\begin{equation*}
\begin{aligned}
&\Delta_{\max}=\max(\ab)-\widetilde{\max}(\ab) \ge \big(a_1-a_{r+1}\big) \sum_{i=r+1}^m w_i, \\
&\Delta_{\min}=\min(\ab)-\widetilde{\min}(\ab)\geq\frac{1}{k}\log\!\Big(s+\sum_{i=1}^{m-s} e^{-k(a_i-a_m)}\Big).
\end{aligned}
\end{equation*}
\end{lemma}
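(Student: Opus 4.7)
The plan is to reduce each bound to a direct algebraic manipulation of the definitions in~\eqref{eq:smooth_op}, invoking only the non-increasing ordering of $\ab$ together with the definitions of $r$ and $s$.

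For $\Delta_{\max}$, I would first observe that $\widetilde{\max}(\ab)=\sum_{i=1}^m w_i a_i$ is a convex combination of the entries, since $w_i>0$ and $\sum_i w_i=1$ by construction of the weights. Hence
\[
\Delta_{\max}=a_1-\sum_{i=1}^m w_i a_i=\sum_{i=1}^m w_i(a_1-a_i),
\]
and the terms with $i\le r$ vanish because $a_i=a_1$ on that index set. For $i\ge r+1$, monotonicity gives $a_i\le a_{r+1}$ and therefore $a_1-a_i\ge a_1-a_{r+1}\ge 0$; factoring this common lower bound out of the remaining sum over $i\in\{r+1,\ldots,m\}$ yields the first inequality.

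For $\Delta_{\min}$ the calculation is even more direct and is in fact an identity. Absorbing $a_m$ into the logarithm via $\tfrac{1}{k}\log e^{ka_m}$, I would write
\[
\Delta_{\min}=a_m+\tfrac{1}{k}\log\Big(\sum_{i=1}^m e^{-ka_i}\Big)=\tfrac{1}{k}\log\Big(\sum_{i=1}^m e^{-k(a_i-a_m)}\Big),
\]
and then split the inner sum into the $s$ indices attaining $a_m$ (each contributing $e^0=1$) and the remaining $m-s$ indices, which reproduces the stated right-hand side exactly (as an equality, so the $\ge$ in the lemma is immediate).

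The argument presents no serious obstacle; the only care required is bookkeeping around $r$ and $s$ so that the monotonicity step applies on the correct index set for the $\max$ bound and exactly $s$ exponentials collapse to $1$ for the $\min$ bound. The sortedness hypothesis in the lemma is invoked precisely to make this bookkeeping transparent, and the permutation-invariance remark justifies assuming it without loss of generality.
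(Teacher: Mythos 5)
Your proposal is correct and follows essentially the same route as the paper's proof: the same convex-combination identity and telescoping for $\Delta_{\max}$, and the same absorption of $a_m$ into the logarithm followed by splitting off the $s$ minimal entries for $\Delta_{\min}$. Your added observation that the $\Delta_{\min}$ bound is in fact an equality is accurate and slightly sharpens the paper's statement, which only records the inequality.
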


\begin{proof}
For the maximum, the error is
\begin{equation*}
\begin{aligned}
\Delta_{\max} &=  a_1 - \sum_{i=1}^m w_ia_i =\sum_{i=1}^m w_i(a_1-a_i) \\
&=\sum_{i=r+1}^m w_i(a_1-a_i)\ge{}(a_1-a_{r+1})\sum_{i=r+1}^m w_i.
\end{aligned}
\end{equation*}
For the minimum, the error is
\begin{equation*}
\begin{aligned}
\Delta_{\min}
&= a_m+\frac{1}{k}\log\!\sum_{i=1}^{m} e^{-k a_i}=\frac{1}{k} \log \Bigl(\!{e^{k a_m}} \sum_{i=1}^{m} e^{-k a_i}\Bigr)\\
&= \frac{1}{k}\log\!\sum_{i=1}^{m} e^{-k(a_i-a_m)}\\
&\ge \frac{1}{k}\log\!\Bigl(s+\sum_{i=1}^{m-s} e^{-k(a_i-a_m)}\Bigr),
\end{aligned}
\end{equation*}
which completes the claim.
\end{proof}

By inspecting Lemma~\ref{lem:lower_bound}, we can tell that ${\Delta_{\max}>0}$ whenever $\ab$ contains an $a_{r+1}$ that is smaller than $a_1$. ${\Delta_{\min}>0}$ always holds under the assumption $m\geq 2$. Since the error $\epsilon$ is obtained by recursively combining $\Delta_{\max}$ and $\Delta_{\min}$, as demonstrated in \cite{gilpin2020smooth}, it cannot be eliminated in general.
One way to mitigate the approximation error is to tune the parameter $k$. However, a large $k$ makes the exponentials grow rapidly, leading to numerical instability, whereas a small $k$ could increase the error. 

\section{Exact Smooth Reformulation}
\label{sec:exact-smooth}

Motivated by the shortcomings of~\eqref{eq:smooth_op}, we propose exact, smooth reformulations of the $\max$ and $\min$ operators. We then recursively apply these rules to the STL robustness trees to reformulate the optimization problem in~\eqref{problem_1}.

\subsection{Node-wise Reformulation Rules}
\label{subsec:nodewise}

For any finite index set $\mathcal J$ and real-valued functions
$\{\rho_j\}_{j\in\mathcal J}$, we write
$\min_{j\in\mathcal J}\rho_j$ and $\max_{j\in\mathcal J}\rho_j$ for
pointwise minimum and maximum. We also introduce a continuous variable $\delta \in\mathbb R$ to present the following lemmas.

\begin{lemma}
\label{lem:min-split}
Let $\{\rho_j\}_{j\in\mathcal J}$ be finite. Then
\begin{equation*}
\min_{j\in\mathcal J}\rho_j(\xb)\ \ge\ \delta
\Longleftrightarrow
\rho_j(\xb)\ \ge\ \delta\quad\forall j\in\mathcal J .    
\end{equation*}
\end{lemma}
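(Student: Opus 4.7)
The statement is the standard characterization of a minimum over a finite index set as a lower bound, so the plan is a direct two-direction argument for the biconditional. I would handle each implication with a one-line appeal to the order-theoretic definition of $\min$, and the finiteness of $\mathcal J$ guarantees the minimum is attained, which is what makes the converse clean.

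For the forward direction, assume $\min_{j\in\mathcal J}\rho_j(\xb)\geq\delta$. Because $\min_{j\in\mathcal J}\rho_j(\xb)\leq \rho_{j}(\xb)$ for every $j\in\mathcal J$ by definition of the pointwise minimum, transitivity immediately yields $\rho_j(\xb)\geq \delta$ for all $j\in\mathcal J$. For the converse, assume $\rho_j(\xb)\geq \delta$ for every $j\in\mathcal J$. Since $\mathcal J$ is finite, there exists $j^\star\in\mathcal J$ with $\rho_{j^\star}(\xb)=\min_{j\in\mathcal J}\rho_j(\xb)$, and then
\begin{equation*}
\min_{j\in\mathcal J}\rho_j(\xb)\;=\;\rho_{j^\star}(\xb)\;\geq\;\delta,
\end{equation*}
completing the equivalence.

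Honestly, there is no real obstacle here; the lemma is an elementary restatement of the definition of $\min$, and its role in the paper is purely notational — it will later be used node-wise on the robustness tree to introduce an auxiliary epigraph-style variable $\delta$ at every $\min$-node, replacing the non-smooth $\min$ by a collection of smooth inequality constraints. The only subtle point worth emphasizing, so the reader is not confused when the lemma is applied to STL subtrees with time-indexed children, is that $\mathcal J$ is finite: this is exactly what is guaranteed for the children sets of each node in the robustness tree of a time-bounded STL formula, so the attainment argument in the converse direction is always valid in the intended application.
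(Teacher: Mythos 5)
Your proof is correct; the paper in fact states this lemma without any proof, treating it as immediate from the definition of the pointwise minimum, and your two-direction argument (min is a lower bound for the forward implication, attainment over the finite index set for the converse) is exactly the standard justification one would supply. No gaps.
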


\begin{lemma}
\label{lem:max-simplex}
Let $\{\rho_j\}_{j\in\mathcal J}$ be finite. Then
\begin{equation*}
\max_{j\in\mathcal J}\rho_j(\xb)\ \ge\ \delta
\Longleftrightarrow
\exists\,\lambdab\in\mathbb{S}^{|\mathcal J|}:\ 
\sum_{j\in\mathcal J}\lambda_j\,\rho_j(\xb)\ \ge\ \delta ,
\end{equation*}
where $\mathbb{S}^{|\mathcal J|}\!=\!\{\lambdab\in\mathbb R^{|\mathcal J|}\mid
\lambdab\ge \bm{0},\ \bm{1}^\top\lambdab=1\}$.
\end{lemma}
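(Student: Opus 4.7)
The plan is to establish the equivalence by recognizing the right-hand side as the variational (LP) representation of the pointwise maximum over a finite index set, namely $\max_{j\in\mathcal J}\rho_j(\xb)=\max_{\lambdab\in\mathbb{S}^{|\mathcal J|}}\sum_{j\in\mathcal J}\lambda_j\rho_j(\xb)$. I would prove the two implications separately, since each direction uses a different structural property of the simplex $\mathbb{S}^{|\mathcal J|}$.

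For the forward direction ($\Rightarrow$), I would assume $\max_{j\in\mathcal J}\rho_j(\xb)\ge\delta$ and pick any maximizer $j^\star\in\argmax_{j\in\mathcal J}\rho_j(\xb)$. Then the vertex $\lambdab=\bm{e}_{j^\star}$ of the simplex (i.e., $\lambda_{j^\star}=1$ and $\lambda_j=0$ otherwise) clearly satisfies $\lambdab\in\mathbb{S}^{|\mathcal J|}$, and the linear combination collapses to $\sum_{j\in\mathcal J}\lambda_j\rho_j(\xb)=\rho_{j^\star}(\xb)=\max_{j\in\mathcal J}\rho_j(\xb)\ge\delta$, which witnesses the existential on the right-hand side.

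For the backward direction ($\Leftarrow$), I would take any $\lambdab\in\mathbb{S}^{|\mathcal J|}$ certifying $\sum_{j\in\mathcal J}\lambda_j\rho_j(\xb)\ge\delta$ and observe that, because $\lambda_j\ge 0$ and $\sum_{j\in\mathcal J}\lambda_j=1$, the quantity $\sum_{j\in\mathcal J}\lambda_j\rho_j(\xb)$ is a convex combination of the values $\{\rho_j(\xb)\}_{j\in\mathcal J}$. Bounding each term by $\lambda_j\rho_j(\xb)\le\lambda_j\max_{k\in\mathcal J}\rho_k(\xb)$ and summing yields $\sum_{j\in\mathcal J}\lambda_j\rho_j(\xb)\le\max_{k\in\mathcal J}\rho_k(\xb)$, so that $\max_{j\in\mathcal J}\rho_j(\xb)\ge\sum_{j\in\mathcal J}\lambda_j\rho_j(\xb)\ge\delta$.

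There is no real obstacle here; the only mild subtlety is to make explicit that the simplex is nonempty and that the existential on the right-hand side can always be realized by a vertex, which is exactly what makes the equivalence tight rather than a one-sided inequality. I would also briefly note the dual perspective: Lemma~\ref{lem:min-split} replaces a $\min$-node by a conjunction of affine inequalities (no auxiliary variables), while Lemma~\ref{lem:max-simplex} replaces a $\max$-node by a single affine inequality at the cost of introducing a simplex-constrained auxiliary vector $\lambdab$, foreshadowing how the two rules will later be composed on the robustness tree to yield the exact smooth reformulation of~\eqref{problem_1}.
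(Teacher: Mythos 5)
Your proof is correct and takes essentially the same approach as the paper: the forward direction uses the simplex vertex $\bm{e}_{j^\star}$ at a maximizer $j^\star$, and the backward direction bounds the convex combination by $\max_{k\in\mathcal J}\rho_k(\xb)$ using $\lambdab\ge\bm{0}$ and $\bm{1}^\top\lambdab=1$. The additional remarks on the LP/variational viewpoint and the contrast with Lemma~\ref{lem:min-split} are accurate but not needed for the argument.
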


\begin{proof}
Let $j^\star\in\arg\max_{j}\rho_j(\xb)$. 
($\Rightarrow$)
Setting $\lambda_{j^\star}=1$ and $\lambda_j=0$ for $j\neq j^\star$, then ${\sum_j\lambda_j \rho_j = \rho_{j^\star} \geq \delta}$.
($\Leftarrow$) 
Since $ \sum_j\lambda_j=1$, we have ${\rho_{j^\star} = \sum_j\lambda_j\rho_{j^\star} \geq \sum_j\lambda_j\rho_j}$. Therefore, $\max_{j}\rho_j=\rho_{j^\star} \geq \delta$.
\end{proof}

\begin{algorithm}[t]
\caption{Depth-First Traversal}
\label{alg:tree_traversal}
\begin{algorithmic}[1]
\Require Robustness tree $\mathcal T^\varphi$
\Ensure Constraints $\mathcal C$; auxiliary variables $\mathcal V_\rho$ and $\mathcal V_\lambda$; root robustness variable $\rho_r$
\State $\mathcal C \gets \emptyset, \ \mathcal V_\rho \gets \emptyset, \ \mathcal V_\lambda \gets \emptyset$
\Function{Traverse}{$v$} \Comment{$v=(\psi,t)$}
  \State create scalar variable $\rho_v$; add $\rho_v$ to $\mathcal V_\rho$
  \If{$v$ is leaf $(\mu,t)$}
      \State add $h^{\mu}(\xb_t) \geq \rho_v$ to $\mathcal C$
  \ElsIf{$v$ is $\min$-node} \Comment{Lemma~\ref{lem:min-split}}
      \ForAll{$u \in \mathrm{ch}(v)$}
          \State $\rho_u \gets \Call{Traverse}{u}$
          \State add $\rho_u \ge \rho_v$ to $\mathcal C$
      \EndFor
  \ElsIf{$v$ is $\max$-node} \Comment{Lemma~\ref{lem:max-simplex}}
      \State let $\{u_j\}_{j=1}^{m} \gets \mathrm{ch}(v)$
      \For{$j=1$ to $m$} \State $\rho_{u_j} \gets \Call{Traverse}{u_j}$ \EndFor
      \State create $\lambdab^v \in \mathbb R^{m}$; add $\lambdab^v$ to $\mathcal V_\lambda$
      \State add $\lambdab^v \ge 0$ and $\mathbf 1^\top \lambdab^v = 1$ to $\mathcal C$
      \State add $\sum_{j=1}^{m} \lambda^v_j\,\rho_{u_j} \ge \rho_v$ to $\mathcal C$
  \EndIf
  \State \Return $\rho_v$
\EndFunction
\State $\rho_r  \gets \Call{Traverse}{root}$
\State add $\rho_r \ge 0$ to $\mathcal C$
\State \Return $(\mathcal C,\ \mathcal V_\rho,\ \mathcal V_\lambda,\ \rho_r)$
\end{algorithmic}
\end{algorithm}

\subsection{Recursive Reformulation on the Robustness Tree}
\label{subsec:tree-recursive}
We reformulate the STL specification in~\eqref{problem_1} as a collection of smooth constraints with auxiliary variables. The procedure is summarized in Algorithm~\ref{alg:tree_traversal}. In this algorithm, we traverse the robustness tree $\mathcal T^{\varphi}$ in depth-first order. At leaf nodes, we encode the robustness using predicate functions (lines 4-5); at internal nodes, we apply Lemmas~\ref{lem:min-split} and \ref{lem:max-simplex} node-wise (lines 6-16). 
We denote by $\mathcal V_\rho$ and $\mathcal V_\lambda$ the collections of auxiliary variables introduced by the traversal, and by $\mathcal C$ the collection of all resulting constraints. The next lemma shows that this reformulation preserves the feasible set.

\begin{lemma}
\label{lem:feas-equivalence}
Let ${\mathcal F_{\mathrm{org}} := \{\xb \mid \rho^{\varphi}(\xb)\ge 0\}}$ be the original feasible set, and let $\mathcal C$ be the collection of constraints returned by Algorithm~\ref{alg:tree_traversal}.
We can define
\begin{equation*}
\begin{aligned}
\mathcal F_{\mathrm{new}}
= \bigl\{\, \xb \ \big|\ &
\exists\,(\mathcal V_\rho,\mathcal V_\lambda, \rho_r)\ \text{such that} \\
& (\xb,\mathcal V_\rho,\mathcal V_\lambda,\rho_r)\ \text{satisfy }\mathcal C \bigr\}.
\end{aligned}
\end{equation*}
Then $\mathcal F_{\mathrm{new}}=\mathcal F_{\mathrm{org}}$.
\end{lemma}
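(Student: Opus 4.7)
The plan is to prove both inclusions simultaneously by structural induction on $\mathcal{T}^{\varphi}$, strengthening the statement so it can be propagated through the recursion. Concretely, I would establish the invariant: at any node $v=(\psi,t)$, the constraints produced by $\textsc{Traverse}(v)$ admit a feasible assignment to all auxiliary variables in its subtree if and only if the chosen value of $\rho_v$ satisfies $\rho_v \le \rho^{\psi}(\xb,t)$. Granting this invariant at the root $r=(\varphi,0)$, the extra constraint $\rho_r \ge 0$ appended after the traversal is jointly satisfiable precisely when $\rho^{\varphi}(\xb) \ge 0$, which is exactly $\xb\in\mathcal F_{\mathrm{org}}$, giving $\mathcal F_{\mathrm{new}}=\mathcal F_{\mathrm{org}}$.

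The base case is a leaf $(\mu,t)$, where the only generated constraint $h^{\mu}(\xb_t)\ge \rho_v$ is, by the robustness semantics, identical to $\rho^{\mu}(\xb,t)\ge \rho_v$. For a $\min$-node $v$ with children $\{u_j\}$, the generated constraints are $\rho_{u_j}\ge \rho_v$ together with the sub-constraints from each $\textsc{Traverse}(u_j)$; combining the inductive hypothesis for each child with Lemma~\ref{lem:min-split}, feasibility reduces to the existence of values $\rho_{u_j}\in[\rho_v,\rho^{\psi_j}(\xb,t_j)]$, which is possible iff $\rho_v\le \min_j \rho^{\psi_j}(\xb,t_j)=\rho^{\psi}(\xb,t)$. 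This case is essentially bookkeeping.

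The step I expect to be the main obstacle is the $\max$-node, where the auxiliary simplex variable $\lambdab^v$ couples the children through $\sum_j \lambda^v_j\,\rho_{u_j}\ge \rho_v$. Here I would argue as follows. For the ``if'' direction, given $\rho_v\le \rho^{\psi}(\xb,t)=\max_j\rho^{\psi_j}(\xb,t_j)$, I pick $j^{\star}\in\arg\max_j \rho^{\psi_j}(\xb,t_j)$, set $\lambda^v_{j^{\star}}=1$ and all other $\lambda^v_j=0$, and use the inductive hypothesis to complete the subtree of $u_{j^{\star}}$ with $\rho_{u_{j^{\star}}}=\rho^{\psi_{j^{\star}}}(\xb,t_{j^{\star}})$; the remaining children can be completed with any $\rho_{u_j}\le \rho^{\psi_j}(\xb,t_j)$ since their weights are zero. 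For the ``only if'' direction, for any feasible assignment the simplex-weighted average satisfies $\sum_j \lambda^v_j\rho_{u_j}\le \max_j \rho_{u_j}\le \max_j \rho^{\psi_j}(\xb,t_j)=\rho^{\psi}(\xb,t)$, where the last inequality comes from the inductive hypothesis applied to each child. This is precisely the content of Lemma~\ref{lem:max-simplex} lifted from a single operator to the recursive tree.

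Assembling these three cases along a depth-first traversal establishes the invariant at every node, and specializing to the root closes the equivalence. The only real subtlety is the $\max$-node: one must be careful that the freedom to choose $\rho_{u_j}$ strictly below $\rho^{\psi_j}(\xb,t_j)$ does not inadvertently shrink the feasible set, which is handled by the extremal (vertex) choice of $\lambdab^v$ above; everything else follows from unwinding the recursive semantics and the two node-wise lemmas.
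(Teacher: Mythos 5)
Your proof is correct and follows the same route as the paper: both reduce the equivalence to the node-wise Lemmas~\ref{lem:min-split} and~\ref{lem:max-simplex} applied recursively over the robustness tree, with the root constraint $\rho_r\ge 0$ closing the argument. The paper states this in two sentences by appeal to ``by construction,'' whereas you make the induction explicit via the invariant $\rho_v\le\rho^{\psi}(\xb,t)$ at each node --- a more rigorous rendering of the same idea, with no gap.
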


\begin{proof}
By construction, the constraints $\mathcal C$ implement the semantics of $\rho^{\varphi}$. If $\xb\in\mathcal F_{\mathrm{org}}$, by Lemma~\ref{lem:min-split} and~\ref{lem:max-simplex}, there exist $(\mathcal V_\rho, \mathcal V_\lambda,\rho_r)$ satisfying $\mathcal C$, hence $\xb\in\mathcal F_{\mathrm{new}}$.
Conversely, if $\xb\in\mathcal F_{\mathrm{new}}$, there exist $(\mathcal V_\rho, \mathcal V_\lambda,\rho_r)$ such that
$(\xb,\mathcal V_\rho,\mathcal V_\lambda,\rho_r)$ satisfy $\mathcal C$ with $\rho_r\ge 0$,
which implies $\xb\in\mathcal F_{\mathrm{org}}$.
\end{proof}

\begin{theorem}[Soundness]
For any discrete-time trajectory $\xb$ and time-bounded STL formula $\varphi$, if there exist $(\mathcal V_\rho, \mathcal V_\lambda,\rho_r)$ such that the constraint set $\mathcal{C}$ is satisfied, then $\xb$ satisfies the specification, i.e. $ \xb \models \varphi $.
\end{theorem}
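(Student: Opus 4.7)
The plan is to derive the theorem as an essentially immediate consequence of Lemma~\ref{lem:feas-equivalence}, while making the underlying inductive structure explicit for clarity. Under the hypothesis, the trajectory $\xb$ together with auxiliary variables $(\mathcal V_\rho, \mathcal V_\lambda, \rho_r)$ satisfies $\mathcal C$, so $\xb \in \mathcal F_{\mathrm{new}}$. Lemma~\ref{lem:feas-equivalence} then gives $\xb \in \mathcal F_{\mathrm{org}}$, i.e.\ $\rho^{\varphi}(\xb) \ge 0$, and by the definition of robustness semantics this is equivalent to $\xb \models \varphi$. So at the high level, soundness is a one-line corollary.

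To make the argument self-contained, I would also include a direct inductive proof showing that, for every node $v=(\psi,t)$ produced by Algorithm~\ref{alg:tree_traversal}, the auxiliary variable $\rho_v$ lower bounds the true robustness of the corresponding subformula at time $t$, namely $\rho_v \le \rho^{\psi}(\xb,t)$. The induction is on the depth of the node in $\mathcal T^{\varphi}$, traversed bottom-up. For a leaf $(\mu,t)$, the constraint $h^{\mu}(\xb_t) \ge \rho_v$ together with $\rho^{\mu}(\xb,t)=h^{\mu}(\xb_t)$ establishes the base case. For a $\min$-node with children $\{u_i\}$, the induction hypothesis gives $\rho_{u_i} \le \rho^{\psi_i}(\xb,t_i)$, and the constraints $\rho_{u_i}\ge \rho_v$ for all $i$ yield $\rho_v \le \min_i \rho_{u_i} \le \min_i \rho^{\psi_i}(\xb,t_i) = \rho^{\psi}(\xb,t)$. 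For a $\max$-node with children $\{u_j\}$ and simplex variable $\lambdab^v$, one uses $\rho_v \le \sum_j \lambda_j^v\rho_{u_j} \le \sum_j \lambda_j^v \rho^{\psi_j}(\xb,t_j) \le \max_j \rho^{\psi_j}(\xb,t_j)=\rho^{\psi}(\xb,t)$, where the last inequality is the standard fact that a convex combination is bounded by the maximum.

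Applying the bound to the root gives $\rho_r \le \rho^{\varphi}(\xb)$, and combining with the terminal constraint $\rho_r \ge 0$ added on line~22 of Algorithm~\ref{alg:tree_traversal} yields $\rho^{\varphi}(\xb) \ge 0$, which is exactly $\xb \models \varphi$.

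I do not anticipate a real obstacle here, since Lemma~\ref{lem:min-split}, Lemma~\ref{lem:max-simplex}, and Lemma~\ref{lem:feas-equivalence} have already done the substantive work. The only point requiring care is verifying that the inductive step covers every node type produced by the construction, in particular the $\until$-case: its $\max$-node has children which are themselves $\min$-nodes, but since the induction is purely on node type (and both are handled), nesting introduces no new difficulty.
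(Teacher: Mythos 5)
Your proof is correct and takes the same route as the paper: the theorem follows immediately from Lemma~\ref{lem:feas-equivalence} together with the root constraint $\rho_r \ge 0$. The additional bottom-up induction showing $\rho_v \le \rho^{\psi}(\xb,t)$ at every node is also sound and effectively re-proves, in more detail, the direction of Lemma~\ref{lem:feas-equivalence} that the paper's own proof leaves terse, but it is redundant once that lemma is invoked.
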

\begin{proof}
By Lemma~\ref{lem:feas-equivalence}, feasibility of ${(\xb,\mathcal V_\rho,\mathcal V_\lambda,\rho_r)}$ with ${\rho_r\ge 0}$
implies ${\rho^{\varphi}(\xb)\geq 0}$, hence $\xb\models\varphi$.
\end{proof}

For a node with $m$ children, Algorithm~\ref{alg:tree_traversal} introduces $O(m)$ auxiliary variables for a $\min$-node and $O(2m)$ for a $\max$-node, so the total number of added variables and constraints grows linearly with the size of the robustness tree, thus polynomially in the STL formula size. Compared to \cite{gilpin2020smooth}, our method adds $m$ more $\lambdab$-variables at each $\max$-node, but the reformulated problem remains smooth and preserves the original feasible set. In the experiments, we show that our method has comparable running time to \cite{gilpin2020smooth} in most cases, while yielding better optimal values.

As every $\min/\max$ operator introduces auxiliary variables, the form in which the formula is presented to Algorithm~\ref{alg:tree_traversal} impacts the efficiency of the embedding.
As such, we apply the formula flattening technique from \cite{kurtz2022mixed} to simplify the robustness trees. It recursively removes consecutive internal nodes of the same type while maintaining equivalence of the STL formula. 

\subsection{Final Smooth NLP}
The final reformulated smooth NLP is given by:
\begin{equation}
\label{eq:final-nlp}
\begin{aligned}
\min_{\xb, \ub,  \mathcal V_\rho, \mathcal V_\lambda, \rho_r} \; & -\alpha \rho_r + \sum_{t=0}^{T} \xb_t^{\top} \bm{Q} \xb_t + \ub_t^{\top} \bm{R} \ub_t  \\
\text{s.t.} \quad & \forall t \in[0, T-1]: \xb_{t+1} = \fb(\xb_t, \ub_t), \\
& \xb_0 \;\text{fixed}, \\
& \forall t \in[0, T]: \xb_t \in \mathcal{X}, \; \ub_t \in \mathcal{U}, \\
& \text{$(\xb,\mathcal V_\rho,\mathcal V_\lambda,\rho_r)$ satisfy all constraints in $\mathcal C$.}
\end{aligned}
\end{equation}
Here $\xb,\ub$ are the original decision variables, and $(\mathcal V_\rho,\mathcal V_\lambda,\rho_r)$ are auxiliary decision variables introduced by the exact smooth reformulation. 

\begin{corollary}
\label{col:opt_val}
The reformulated problem~\eqref{eq:final-nlp} has the same global optimal value as the original problem~\eqref{problem_1}.
\end{corollary}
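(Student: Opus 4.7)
My plan is to combine Lemma~\ref{lem:feas-equivalence} with a tight bound on the auxiliary root variable $\rho_r$. Lemma~\ref{lem:feas-equivalence} already guarantees that the projections onto the $\xb$-space of the feasible sets of~\eqref{problem_1} and~\eqref{eq:final-nlp} coincide (dynamics, state/input constraints, and $\xb_0$ are identical in the two problems). What remains is to show that, for any feasible $(\xb,\ub)$, minimizing the new objective over the auxiliary variables $(\mathcal V_\rho,\mathcal V_\lambda,\rho_r)$ yields exactly $-\alpha\,\rho^{\varphi}(\xb)$, so that the two objectives agree pointwise on the shared feasible set.

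The key step is a structural induction on the robustness tree $\mathcal T^{\varphi}$ proving that, whenever $(\xb,\mathcal V_\rho,\mathcal V_\lambda,\rho_r)$ satisfies $\mathcal C$, the auxiliary variable $\rho_v$ at each node $v=(\psi,t)$ satisfies $\rho_v \le \rho^{\psi}(\xb,t)$, and that equality is simultaneously attainable at every node. The leaf case follows directly from $h^{\mu}(\xb_t)\ge \rho_v$. For a $\min$-node, the constraints $\rho_u\ge\rho_v$ for all children $u$ together with the inductive hypothesis give $\rho_v \le \min_{u}\rho^{\psi_u}=\rho^{\psi}(\xb,t)$. For a $\max$-node, the simplex constraint and the inductive hypothesis give $\rho_v \le \sum_j\lambda_j\rho_{u_j}\le \sum_j\lambda_j\rho^{\psi_{u_j}}\le \max_j\rho^{\psi_{u_j}}=\rho^{\psi}(\xb,t)$. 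Tightness is achieved constructively by setting $\rho_v:=\rho^{\psi}(\xb,t)$ at every node and, at each $\max$-node, placing all mass of $\lambdab^{v}$ on an index $j^{\star}\in\arg\max_j\rho^{\psi_{u_j}}(\xb,t)$; all constraints in $\mathcal C$ are then satisfied with equality in the defining inequalities.

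Applying this at the root yields $0 \le \rho_r \le \rho^{\varphi}(\xb)$, with the lower bound enforced by the explicit constraint $\rho_r\ge 0$ added on line 19 of Algorithm~\ref{alg:tree_traversal}. Because $\alpha>0$ and the quadratic terms depend only on $(\xb,\ub)$, for any $(\xb,\ub)$ feasible in~\eqref{problem_1} the inner minimization over the auxiliary variables selects $\rho_r=\rho^{\varphi}(\xb)\ge 0$, so the inner optimal value is exactly $-\alpha\,\rho^{\varphi}(\xb)+\sum_t \xb_t^{\top}\bm Q\xb_t+\ub_t^{\top}\bm R\ub_t$, matching the objective of~\eqref{problem_1}. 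Conversely, any $(\xb,\ub)$ infeasible in~\eqref{problem_1} has $\rho^{\varphi}(\xb)<0$ and hence admits no auxiliary completion satisfying $\mathcal C$, again by the induction above and Lemma~\ref{lem:feas-equivalence}. Taking the outer minimum over $(\xb,\ub)$ then gives equal optimal values. I expect the only mildly delicate point to be carefully handling the $\max$-node case in the inductive step, since the constraint uses a convex combination rather than a hard $\max$, but the simplex structure together with the inductive bound on each child closes the argument cleanly.
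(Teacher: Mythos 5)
Your proof is correct, and it is in fact more complete than the paper's own argument. The paper's proof is a two-line appeal to Lemma~\ref{lem:feas-equivalence} plus the claim that ``the objective function is unchanged.'' Strictly speaking the objective \emph{is} changed: \eqref{eq:final-nlp} minimizes $-\alpha\rho_r$ over a decision variable $\rho_r$, not $-\alpha\rho^{\varphi}(\xb)$, so equality of optimal values requires knowing that the inner optimization over the auxiliary variables recovers exactly $-\alpha\rho^{\varphi}(\xb)$ for each feasible $\xb$. That is precisely the step you supply: the structural induction showing $\rho_v\le\rho^{\psi}(\xb,t)$ for every feasible completion (ruling out the reformulation achieving a strictly lower objective by inflating $\rho_r$ beyond the true robustness), together with the constructive tight assignment (ruling out a strictly higher optimal value). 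Both directions are needed, and your handling of the $\max$-node via the convex-combination bound $\sum_j\lambda_j\rho_{u_j}\le\max_j\rho^{\psi_{u_j}}$ is exactly right. The only cosmetic remark is that your induction essentially re-proves the content of Lemmas~\ref{lem:min-split} and~\ref{lem:max-simplex} in a value-propagation form; you could shorten the write-up by citing them for the per-node equivalences and reserving the induction for composing them up the tree. Net effect: same skeleton as the paper (feasible-set equivalence plus objective matching), but you close a genuine gap the paper leaves implicit.
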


\begin{proof}
By Lemma~\ref{lem:feas-equivalence}, the feasible sets are identical. Since the objective function is unchanged, the global optimal values coincide.
\end{proof}

\begin{remark}
From an optimization standpoint, we do not show that the reformulation preserves the correspondence of the global minimizer. Thus, although Corollary~\ref{col:opt_val} ensures that the optimal values coincide, we do not formally guarantee that a numerical solver will find it.
\end{remark}

A possible route to certify the correspondence of minimizer is to convert the formulas to conjunctive normal form (CNF) and encode them as max–min constraints \cite{wehbeh2025smooth,kirjner1998conversion}. However, this transformation increases formula size and, in turn, the runtime of nonlinear numerical solvers. Instead, our experiments indicate that our direct reformulation works well with off-the-shelf smooth NLP solvers and consistently yields trajectories with lower optimal values than the approximation-based baseline \cite{gilpin2020smooth}. Moreover, with good warm starts, we often observe convergence to the global optimum, as shown in Section~\ref{sec:linear_exp}.

To warm-start $(\mathcal V_\rho,\mathcal V_\lambda,\rho_r)$ given a reference trajectory $\xb_{\mathrm{ref}}$, we traverse $\mathcal T^\varphi$, similar to Algorithm~\ref{alg:tree_traversal}. For each node $v$, we evaluate robustness on $\xb_{\mathrm{ref}}$ using the discrete $\max$ and $\min$, and record $\rho_v$ (with $\rho_r$ at the root). Since $\bm{\lambda}$ is only introduced for $\max$ operators, for each $\max$-node $v$ with children $\mathrm{ch}(v)=\{u_j\}_{j=1}^m$, we set $j^\star=\arg\max_j \rho_{u_j}$, then define $\bm{\lambda}^v\in\mathbb{R}^m$ by $\lambda^v_{j^\star}=1$ and $\lambda^v_j=0$ for $j\neq j^\star$. Lastly, we store $\rho_v$ in $\mathcal V_\rho$ and $\bm{\lambda}^v$ in $\mathcal V_\lambda$. 

\section{Numerical Experiments}
\label{sec:exp}
We validate the proposed method in both linear benchmarks and a nonlinear case study, all involving a mobile robot navigating in planar environments. Our Python implementation\footnote{Code available at: \url{https://github.com/KTH-RPL-Planiacs/stl_smooth_reformulation}} builds on the code in \cite{kurtz2022mixed} and uses Drake’s \cite{tedrake2019drake} mathematical programming interface. We use SNOPT \cite{gill2005snopt}, a Sequential Quadratic Programming (SQP) solver, to solve the NLPs. All experiments were run on a laptop with an Intel Core i7-13700H CPU and 32 GB RAM. 

\subsection{Linear Benchmarks}
\label{sec:linear_exp}
We benchmark on the linear-dynamics, linear-predicate scenarios from \cite{kurtz2022mixed}. The robot follows discrete-time double-integrator dynamics
$$
\xb_t=[p_{x_t},p_{y_t},\dot p_{x_t},\dot p_{y_t}]^\top \in \mathbb{R}^4,\quad
\ub_t=[\ddot p_{x_t},\ddot p_{y_t}]^\top \in \mathbb{R}^2,
$$
where $p_{x_t}$ and $p_{y_t}$ are the horizontal and vertical positions. The dynamics are
$$
\xb_{t+1} = \bm{A}\xb_t + \bm{B}\ub_t,
\qquad
\bm{A} = \begin{bmatrix} \bm{I}_2 & \bm{I}_2 \\[2pt] \bm{0} & \bm{I}_2 \end{bmatrix}, \quad
\bm{B} = \begin{bmatrix} \bm{0} \\[2pt] \bm{I}_2 \end{bmatrix},
$$
where $\bm{I}_2$ and $\bm{0}$ are the $2\times 2$ identity and zero matrices, respectively. We use the initial states from \cite{kurtz2022mixed}, set $\alpha=1$, $\bm Q=\operatorname{diag}(0,0,1,1)$, $\bm R=\bm I_2$, and vary the time horizon $T$.

We compare our method to the NLP using smooth approximations \cite{gilpin2020smooth} and the MICP formulation \cite{belta2019formal}. To warm-start the NLPs across different time horizons, we linearly interpolate the MICP solutions computed at $T=25$. For \cite{gilpin2020smooth}, we use grid search to find the parameter $k$ that minimizes the optimal value. We use Gurobi \cite{gurobi} to solve the MICPs. The wall-clock solve time is capped at 600 s. Runs exceeding this limit are reported as timeouts.

Table~\ref{tab:stl-robustness-time-compact} reports the optimal values and solve times. MICP is guaranteed to find the global optimal solutions, but it can easily run into a timeout due to the combinatorial complexity. Instead, both NLP-based methods are more scalable than MICP. Among them, our method consistently attains better optimal values than \cite{gilpin2020smooth}. At $T=25$, we warm-start both NLP-based methods with the globally optimal MICP solutions. Interestingly, our method preserves these optimal values, whereas \cite{gilpin2020smooth} often degrades due to the approximation error. For $T>25$, the optimal values of our method can underperform MICP. This is because our method is always warm-started from the $T=25$ MICP solutions via interpolation, which are no longer optimal with a larger $T$. Regarding runtime, our method is very fast when the initial guesses are good. However, this speed advantage does not always persist. While customizing solvers may further improve the speed, it is beyond the scope of this paper.

\begin{table}[t]
\centering
\footnotesize
\setlength{\tabcolsep}{4.5pt}
\caption{Results of linear scenarios. (\text{-}) indicates a timeout.}
\label{tab:stl-robustness-time-compact}
\begin{tabular}{@{}l c c c c c c c@{}}
\toprule
\multirow{2}{*}{Scenario} & \multirow{2}{*}{$T$} &
\multicolumn{3}{c}{Optimal Value  $\downarrow$} & \multicolumn{3}{c}{Solve Time (s)  $\downarrow$} \\
\cmidrule(l){3-5}\cmidrule(l){6-8}
& & {MICP} & {Ours} & {\cite{gilpin2020smooth}} & {MICP} & {Ours} & {\cite{gilpin2020smooth}} \\
\midrule
\multirow{3}{*}{Two-Target}
& 25  & {\bfseries 3.94} & {\bfseries 3.94} & 4.00 & 0.39 & {\bfseries 0.01} & 0.66 \\
& 50  & {\bfseries 1.67} &  1.68 & 1.70 & 469.52 & {\bfseries 5.14} & 5.24 \\
& 75  & \text{-} & {\bfseries 0.96} & 0.98 & \text{-} & 11.56 & {\bfseries 10.35} \\
\midrule
\multirow{3}{*}{Many-Target}
& 25  & {\bfseries 6.94} & {\bfseries 6.94} & 7.08 & 2.39 & {\bfseries 0.01} & 1.21 \\
& 50  & \text{-} & {\bfseries 3.32} & 3.37 & \text{-} & 9.79 & {\bfseries 8.10} \\
& 75  & \text{-} & {\bfseries 2.18} & 2.22 & \text{-} & 35.55 & {\bfseries 21.54} \\
\midrule
\multirow{3}{*}{Narrow-Passage}
& 25  & {\bfseries 1.83} & {\bfseries 1.83} & 1.89 & 1.30 & {\bfseries 0.01} & 0.30 \\
& 50  & {\bfseries 0.88} & {\bfseries 0.88} & 0.92 & 82.22 & 8.70 & {\bfseries 2.70} \\
& 75  & \text{-} & {\bfseries 0.53} & 0.58 & \text{-} & 32.89 & {\bfseries 8.09} \\
\midrule
\multirow{3}{*}{Door-Puzzle}
& 25  & {\bfseries 27.69} & {\bfseries 27.69} & 30.10 & 3.68 & {\bfseries 0.03} & 0.97 \\
& 50  & \text{-} & {\bfseries 12.58} & 14.00 &  \text{-} & 568.83 & {\bfseries 9.62} \\
& 75  & \text{-} & \text{-} & {\bfseries 8.86} & \text{-} & \text{-} & {\bfseries 47.50} \\
\bottomrule
\end{tabular}
\end{table}

\subsection{Nonlinear Case Study}
We then validate the proposed method on a scenario with nonlinear dynamics and nonlinear predicates. The state and control input are defined as
$$
\xb_t=[p_{x_t},p_{y_t},\theta_t]^\top \in \mathbb{R}^3,\quad
\ub_t=[v_t,\omega_t]^\top \in \mathbb{R}^2,
$$
where $p_{x_t}$ and $p_{y_t}$ denote the robot’s horizontal and vertical positions, and $\theta_t$ is the heading. 
The system follows a kinematic unicycle model,
$$
\xb_{t+1}=\fb(\xb_t,\ub_t)=
\begin{bmatrix}
p_{x_t}+\Delta t\, v_t\cos\theta_t\\
p_{y_t}+\Delta t\, v_t\sin\theta_t\\
\theta_t+\Delta t\, \omega_t
\end{bmatrix},
$$
where $\Delta t=0.5$ is the sampling time. We set the time horizon $T=50$, and the STL specification is defined as:
\begin{equation*}
\begin{aligned}
\varphi =
&\always_{[0,2]}\eventually_{[5,7]}\mu_{1}
\land \eventually_{[15,17]}\always_{[2,5]}\mu_{2} \\
&\land \eventually_{[27,35]}\bigl(\mu_{3}\,\until_{[2,10]}\,\mu_{1}\bigr)
\land \always_{[0,50]}\neg\mu_{4}
\land \eventually_{[37,50]}\mu_{5},
\end{aligned}
\end{equation*}
and ${h^{\mu_i}=r_i^2-(p_{x_t}-c_{i,x})^2-(p_{y_t}-c_{i,y})^2}$, where $r_i$ and $[c_{i,x},c_{i,y}]^{\top}$ are the radius and center. We note that this specification cannot be solved by the convex-concave procedure (CCP)-based method in \cite{takayama2025stlccp} since it requires predicate functions to be convex when the parent node is of type $\max$. Instead, our formulation has no such restriction.

For general nonlinear settings, it is non-trivial to find effective initializations for the NLPs.  We therefore use randomized initial guesses drawn from a normal distribution. We set ${\alpha=10}$, $\bm Q=\operatorname{diag}(0,0,0)$, and $\bm R=\operatorname{diag}(0.1,1)$, and constrain the control inputs by $|v_t|\le 1$ and $|\omega_t|\le 1$. The initial state $\xb_0=[2, 3, -\pi/2]^{\top}$. We compare our method with \cite{gilpin2020smooth} over $100$ random seeds. We pick $k=25$ for the baseline method. The trajectory achieving the lowest objective value is shown in Fig.~\ref{fig:nonlinear_trajs}, and corresponding metrics are summarized in Table~\ref{tab:nonlinear-results}. 

From the results, we can observe that our method is able to attain a better optimal value, but exhibits a slightly higher infeasibility rate. This may be mitigated with improved initializations. We leave the development of systematic initialization strategies for future work.

\begin{figure}[t]
    \centering
    \includegraphics[width=1\linewidth]{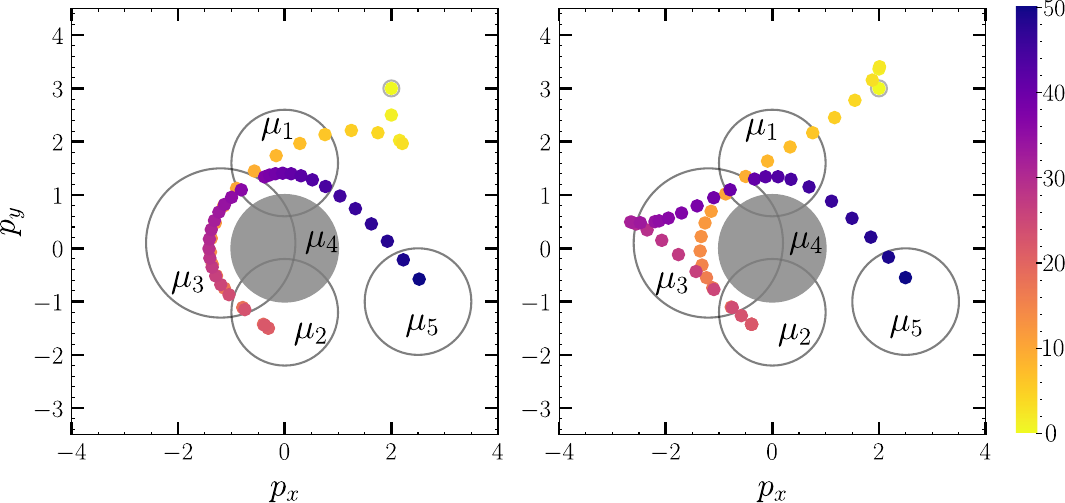}
    \caption{Comparison of trajectories generated by the baseline method \cite{gilpin2020smooth} (left) and our approach (right). The color bar indicates the time steps along each trajectory.}
    \label{fig:nonlinear_trajs}
\end{figure}

\begin{table}[th]
\centering
\footnotesize
\caption{Results of the nonlinear scenario. The robustness is computed using the discrete $\max$ and $\min$, and infeasible indicates the solver failed to find a solution given a random initial guess.}
\label{tab:nonlinear-results}
\begin{tabular}{@{}l c c c c@{}}
\toprule
Method & {Optimal Value} & {Robustness $\uparrow$} & {Solve Time (s)} & {Infeasible} \\
\midrule
Ours & {\bfseries -1.65} & {\bfseries 0.80} & {\bfseries 3.17} & 19/100 \\
\cite{gilpin2020smooth} & -0.86 & 0.79 & 11.03 & {\bfseries 14/100} \\
\bottomrule
\end{tabular}
\end{table}

\section{Conclusion}
\label{sec:conclusion}
In this paper, we take a further step in STL trajectory optimization by providing exact smooth reformulations of the $\max$ and $\min$ operators. Our approach accommodates general nonlinear dynamics and nonlinear predicates while being exact in quantifying the robustness. We validate the method on linear benchmarks and a nonlinear example. Future work will explore informed warm-start techniques for the NLPs.

\section*{Acknowledgements}
We thank Luyao Zhang, Chelsea Sidrane, and Luzia Knoedler for the helpful discussions.
    
\bibliographystyle{IEEEtran}
\bibliography{references}

\end{document}